\theoremstyle{plain}
\newtheorem{theorem}{Theorem}[section]
\newtheorem{proposition}[theorem]{Proposition}
\theoremstyle{definition}
\newtheorem{assumption}[theorem]{Assumption}
\theoremstyle{remark}
\newcommand{\balpha}{{\boldsymbol \alpha}}
\newcommand{\bA}{{\boldsymbol A}}
\newcommand{\bw}{{\boldsymbol w}}
\newcommand{\bx}{{\boldsymbol x}}
\newcommand{\by}{{\boldsymbol y}}
\newcommand{\E}{{\mathbb E}}
\newcommand{\R}{{\mathbb R}}
\title{Distributed Dual Coordinate Ascent with Imbalanced Data on a General Tree Network}
\name{
    Myung Cho${^{\star}}^{\mathsection}$%
    \qquad Lifeng Lai$^{\dagger}$\thanks{Lifeng Lai's research is supported by National Science Foundation (NSF) under grant ECCS-2000415.}%
    \qquad Weiyu Xu$^{\ddagger}$\thanks{Weiyu Xu's research is supported by NSF under grant ECCS-2000425 and ECCS-2133205.}%
}
\address{%
    $^{\star}$ Department of Electrical and Computer Engineering, California State University, Northridge, CA, USA \\%
    $^{\mathsection}$ Department of Electrical and Computer Engineering, Penn State Behrend, Erie, PA, USA \\%
    $^{\dagger}$  Department of Electrical and Computer Engineering, University of California, Davis, CA, USA\\%
    $^{\ddagger}$  Department of Electrical and Computer Engineering, University of Iowa, Iowa City, IA, USA%
}
\begin{document}

\maketitle

\begin{abstract}
In this paper, we investigate the impact of imbalanced data on the convergence of distributed dual coordinate ascent in a tree network for solving an empirical loss minimization problem in distributed machine learning. To address this issue, we propose a method called delayed generalized distributed dual coordinate ascent that takes into account the information of the imbalanced data, and provide the analysis of the proposed algorithm. Numerical experiments confirm the effectiveness of our proposed method in improving the convergence speed of distributed dual coordinate ascent in a tree network.
\end{abstract}
\begin{keywords}
distributed machine learning, federated learning, tree network, network topology, imbalanced data
\end{keywords}

\vspace{-0.5em}
\section{Introduction}
\label{sec:intro}
\vspace{-0.5em}
In the field of Machine Learning (ML) and Artificial Intelligence (AI), the use of large amounts of data, known as \textit{big data}, plays a crucial role in the performance of ML and AI techniques. Due to the exceptional performance of ML/AI with big data, these techniques are becoming increasingly popular and are being applied to many different applications.

However, in practice, processing ML/AI operations with big data poses many challenges such as limited hardware resources including storage space and processing power, and privacy and security issues. Specifically, due to limited storage space, big data is often stored in a distributed manner over a network, raising questions about how to deal with these distributed data when processing ML/AI operations. Furthermore, even though there are distributed algorithms that can process distributed data for ML/AI operations, communication to share intermediate results such as learning parameters can be a significant challenge due to constraints such as limited communication bandwidth, delay, and power. Thus, it is important to design efficient algorithms for distributed ML/AI processes that take into account these communication network constraints when dealing with distributed data.

To address the challenge of handling distributed data on a network, many research were conducted to develop efficient distributed ML/AI algorithms. Researchers in \cite{teng2018bayesian,ferdinand2018anytime,cong2017efficient,shi2018dag,ferdinand2017anytime}  studied synchronous Stochastic Gradient Decent (SGD). Synchronous Stochastic Dual Coordinate Ascent (SDCA) and its variations were investigated in \cite{hsieh2008dual,yang2013trading,jaggi2014communication,shalev2013stochastic,deng2021local,devarakonda2019avoiding,cho2019generalized,cho2021distributed}. Asynchronous SGD was studied in \cite{zhao2016fast,zhang2015fast,lian2018asynchronous}. Asynchronous SDCA was investigated for handling distributed data on a network in \cite{huo2016distributed,hsieh2015PASSCoDe,liu2014asynchronous,sun2017asynchronous}.

However, most of the research in this area has focused on designing distributed algorithms on a star network, which is a very simple network topology. It's worth noting that data are not always stored on a star network. In reality, networks can have various topologies such as star, tree, ring, bus, or mesh, etc. Additionally, in a network, some nodes may not directly communicate with a central node. In this case, a line of nodes may be considered as a virtual node directly connected to a central node to apply distributed algorithms for a star network. However, the system can easily suffer from significant communication delays caused by passing intermediate results through the line of nodes to a central node. Thus, it is important to design efficient distributed ML/AI algorithms by considering different network topologies. To address this problem, the researchers in \cite{cho2019generalized,cho2021distributed} have designed a distributed dual coordinate ascent for distributed ML process on a general tree network, and provided the convergence analysis of the algorithm on a general tree network under the assumption that the dataset is evenly distributed on a tree network. Since every connected network has its spanning tree, the distributed algorithm can be applied to various network topologies as long as there are no isolated nodes in the network.

In practice, however, due to different circumstances and situations in data acquisition such as different frequency of data acquisition in nodes, different sensitivity of sensors, and noise or bias during data acquisition, we can have imbalanced data on a network in distributed ML process. In this paper, we investigate the effect of imbalanced data on distributed dual coordinate ascent in a general tree network, and propose our method called a generalized distributed dual coordinate ascent on a general tree network to mitigate the effect of imbalanced data on processing distributed data on a tree network. 


\textbf{Notations:} We denoted the set of real numbers as $\R$. We use $[k]$ to denote the index set of the coordinates in the $k$-th coordinate block. For an index set $Q$, $|Q|$ and $\overline{Q}$ represent the cardinality of $Q$ and the complement of $Q$ respectively. We reserve bold letters to represent vectors and matrices. If an index set is used as a subscript of a vector (resp. matrix), it indicates the partial vector (resp. partial matrix) over the index set (resp. with columns over the index set). We use the superscript $(t)$ to denote the $t$-th iteration. For instance, $\balpha^{(t)}_{[k]}$ represents a partial vector $\balpha$ over the $k$-th block coordinate set at the $t$-th iteration. The superscript $\star$ is reserved to denote the optimal solution to an optimization problem.

\vspace{-0.5em}
\section{Problem Formulation}
\label{sec:prob}
\vspace{-0.5em} 
We perform an ML operation on a distributed dataset, denoted as $\{(\bx_i,y_i)\}_{i=1}^m$, where $\bx_i \in \R^d$ represents the $i$-th data point and $y_i$ is the measurement or label information associated with it. Our dataset is distributed in a tree-shaped network consisting of $K$ local workers, as illustrated in Fig. \ref{fig:generalDistSystem}. The $k$-th local worker holds a subset of the dataset, specifically $\{(\bx_i,y_i)\}_{i \in [k]}$ where $|[k]| < m$ and $k=1,2,...,K$. Our objective is to find the global optimal solution, $\bw^{\star}$, by solving the following regularized loss minimization problem with the distributed dataset: \\[-15pt]
\par\noindent\small
\begin{align}\label{prob:primal}
    \underset{\bw \in \R^d}{\text{minimize}}\; P(\bw) \triangleq \frac{\lambda}{2} \|\bw \|^2_2 + \frac{1}{m} \sum_{i=1}^{m} \ell_i(\bw^T \bx_i),\\[-20pt] \nonumber
\end{align}
\normalsize
where $\ell_i(\cdot)$, $i=1,2,...,m$, are loss functions, and $\lambda \geq 0$ is a tuning parameter. Depending on the loss functions, the optimization problem \eqref{prob:primal} can be a regression problem or a classification problem. For instance, when the loss function is $\ell_i(\bw^T \bx_i) = (\bw^T\bx_i - y_i)^2$ with measurement data $y_i \in \R$, the problem can be interpreted as a regression problem. When the loss function is $\ell_i(\bw^T\bx_i) = \max( 0, 1 - y_i (\bw^T \bx_i) )$ with label information $y_i \in \R$, \eqref{prob:primal} becomes a Support Vector Machine (SVM) classification problem. Furthermore, we assume that the data points are normalized to ensure that the $\ell_2$ norm of each $\bx_i$ is bounded, i.e., $\|\bx_i \|_2 \leq 1$, for $i=1,2,...,m$.

By considering the conjugate function of the loss function $\ell_i(a)$, defined as $\ell_i(a)=\sup_{b} ab - \ell_i^{*}(b)$, we can derive the following dual problem from the primal problem \eqref{prob:primal}: \\[-15pt]
\par\noindent\small
\begin{align}\label{prob:dual}
    \underset{\balpha \in \R^m}{\text{maximize}}\; D(\balpha) \triangleq -\frac{\lambda}{2} \|\bA\balpha\|^2_2 - \frac{1}{m} \sum_{i=1}^{m} \ell^{*}_i(-\alpha_i),\\[-20pt] \nonumber
\end{align}
\normalsize
where $\bA \in \R^{d \times m}$ is a data matrix whose $i$-th column is $\frac{1}{\lambda m} \bx_i$, and $\alpha_i$ is the $i$-th dual variable corresponding to the $i$-th datum $\bx_i$, $i=1, ..., m$. By defining $\bw(\balpha) \triangleq \bA \balpha$, we can have a duality gap as $P(\bw(\balpha)) - D(\balpha)$ which can be used as a measurable quantity for how close an estimated solution is to an optimal solution. Remark that the weak duality theorem \cite{boyd2004convex} ensures that for all $\bw$, $P(\bw) \geq D(\balpha)$. When we substitute $\bw$ with $\bw(\balpha)$, this condition still holds with $P(\bw(\balpha)) \geq D(\balpha)$. If we can find a variable $\balpha^{\star}$ such that $P(\bw(\balpha^{\star})) = D(\balpha^{\star})$, then, from the strong duality, $\balpha^{\star}$ and $\bw(\balpha^{\star})$ can be recognized as optimal solutions to the dual problem \eqref{prob:dual} and the primal problem \eqref{prob:primal} respectively. This paper aims to design a distributed algorithm that efficiently solves \eqref{prob:dual} while considering the information of imbalanced data, with the goal of mitigating the effect caused by data imbalance.


\begin{figure}[t]
    \centering
    \includegraphics[scale=0.55]{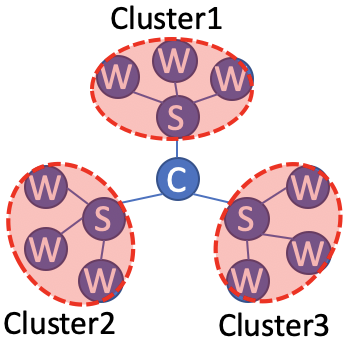}
    \vspace{-1em}
    \caption{\small{Illustration of a tree-structured network, which has two layers. In the network, a central station (root node) has three direct child nodes, i.e., sub-central node, denoted by $S$. Each sub-central node $S$ has three direct child nodes, i.e., local workers, denoted by $W$.}}
    \label{fig:generalDistSystem}
\end{figure}

\vspace{-1em}
\section{Review of Distributed Algorithms with Balanced or Imbalanced Data} 
\label{sec:review}
\vspace{-0.5em}
In previous studies, many researchers have addressed the issue of handling balanced or imbalanced data in the design of distributed algorithms for ML operations. For example, the authors  in \cite{wang2016fast,wang2021distributed} used the Alternating Direction Method of Multipliers (ADMM) technique to deal with various types of imbalanced data on a star network for classification problems. The authors in \cite{konevcny2015federated} considered a separable optimization problem on a star network with imbalanced data in the case when the number of local workers is larger than the number of data points. Unfortunately, most of the research focused on designing distributed algorithms on a star network, which is a very simple and special network topology. 


To deal with different types of network topologies, the researchers in \cite{cho2019generalized,cho2021distributed} extended the Distributed Dual Coordinate Ascent method  to a general tree network with $p$ layers (DDCA-Tree), where the root node is located in the 0-th layer and local workers are located in the $p$-th layer. Under the assumption that every node on the network has balanced data, the design of the distributed algorithm was studied. However, the design of an efficient distributed algorithm that considers imbalanced data in a general tree network has not been fully studied. This paper aims to address this research gap and investigate this topic.


\vspace{-1em}
\section{Generalized distributed dual coordinate ascent in a tree network}
\label{subsec:GDCA}
\vspace{-0.5em}
We aim to investigate the effect of imbalanced data on the convergence rate of DDCA-Tree when applied to a general tree network. We will also propose an enhanced version of DDCA-Tree that takes into account the imbalanced data to improve its performance. Specifically, we will address the following questions: (Q1) How does the convergence rate of DDCA-Tree differ when applied to imbalanced data? (Q2) How can we design DDCA-Tree to better handle imbalanced data by incorporating information about the imbalance?

To mitigate the effect of imbalanced data distribution, we can consider the information on the imbalanced data distribution in the accumulation of global parameters, which is the concept of Generalized distributed Dual Coordinate Ascent on a general tree network (GDCA-Tree). In GDCA-Tree, described in Algorithm \ref{alg:general_weighted}, we propose to use different weights for global parameters that local workers have in updating a global parameter at the $t$-th iteration, $\bw^{(t)}$. More specifically, in the accumulation of the global parameters, by considering the information of imbalanced data, we use a weighted sum updating scheme for global parameter $\bw^{(t)}$ in a general tree node $Q$ having $K$ child nodes as  \\[-15pt]
\par\noindent\small
\begin{align}
&\bw^{(t)} = \bw^{(t-1)} + \sum_{k=1}^{K} \beta_k  \bigtriangleup \bw_k, \label{eq:updating} \\[-20pt] \nonumber
\end{align}
\normalsize
where $\beta_k$ is a weight considering imbalanced data factor for the $k$-th updating parameter $\bigtriangleup \bw_k$, $\sum_{k=1}^K \beta_k \hspace{-0.2em}=\hspace{-0.2em}1$, $0 \leq \beta_k \leq 1$.

For the convergence analysis of GDCA-Tree, let us define the local sub-optimality gap, $\epsilon_{Q,k}$, at the $k$-th direct child node of a general tree node $Q$ as follows: \\[-15pt]
\par\noindent\small
\begin{align}\label{def:epsilon_Qk}
     & \epsilon_{Q,k}(\balpha) \triangleq \underset{\hat{\balpha}_{[Q;k]}}{\text{maximize}} \;D(\balpha_{[Q;1]},...,\hat{\balpha}_{[Q;k]},...,\balpha_{[Q;K]},\balpha_{\overline{Q}} ) \nonumber \\[-5pt]
     &\quad\quad\quad\quad\quad\quad - D(\balpha_{[Q;1]},...,\balpha_{[Q;k]},...,\balpha_{[Q;K]}, \balpha_{\overline{Q}}),\\[-20pt] \nonumber
\end{align}
\normalsize
where $[Q;k]$ is the set of indices of data points in the $k$-th direct child node of $Q$, $\balpha_{[Q;k]}$ is the partial vector of dual vector $\balpha \in \R^m$ that corresponds to the data points that the $k$-th direct child node of $Q$ has, and $\overline{Q}$ is complement of an index set $Q$. This local sub-optimality gap represents the maximum objective value gap that the $k$-th direct child node of $Q$ can achieve from the current $\balpha$ value. With this definition, we introduce the following assumption.
\vspace{-0.5em}
\begin{assumption} [Geometric improvement of GDCA-Tree at a direct child node] \label{asp:Local_Improve}
For a tree node $Q$ on the $i$-th layer, we assume that for any given $\balpha$ at the $k$-th direct child node of $Q$, the GDCA-Tree provides an update $\bigtriangleup \balpha_{[Q;k]}$ such that  \\[-15pt]
\par\noindent\small
\begin{align} \label{eq:Theta_i1}
    & \E[ \epsilon_{Q,k} (\balpha_{[Q;1]},...,\balpha_{[Q;k-1]}, \balpha_{[Q;k]}+\bigtriangleup \balpha_{[Q;k]},...,\balpha_{[Q;K]}, \balpha_{\overline{Q}})] \nonumber \\ 
   & \leq \Theta_{i+1} \cdot \epsilon_{Q,k}(\balpha),\\[-20pt] \nonumber
\end{align}
\normalsize
where $\Theta_{i+1} \in [0,1)$ is local improvement.
\end{assumption}
\vspace{-0.5em}
It represents that if this assumption holds, then, the expectation of the local sub-optimality gap at the $k$-th direct child node of a tree node $Q$ is decreased by the factor of local improvement $\Theta_{i+1}$. When using Local Stochastic Dual Coordinate Ascent (LocalSDCA), with a local dataset in a leaf node in the $p$-th layer of a tree network, it is possible to achieve the following proposition concerning the parameter $\Theta_{i+1}$:
\begin{proposition} (\cite{jaggi2014communication} ) \label{prop:LocalSDCA}
Assume that loss functions $\ell_i(\cdot)$ are $1/\gamma$-smooth. For a tree node $Q$ on the $(p-1)$-th layer, its direct child node is a leaf node. Then, for the leaf node $B$ in the $p$-th layer using LocalSDCA, Assumption \ref{asp:Local_Improve} holds with \\[-15pt]
\par\noindent\small
\begin{align}\label{eq:Theta}
\Theta_p = (1 - \frac{\lambda m \gamma}{ 1+ \lambda m \gamma} \cdot\frac{1}{m_B} )^{T_p},\\[-20pt] \nonumber
\end{align}
\normalsize
where $m_B$ is the size of data points that the leaf node $B$ has, and $T_p$ is the number of iterations in LocalSDCA. 
\end{proposition}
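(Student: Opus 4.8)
The plan is to recognize the quantity $\epsilon_{Q,k}$, specialized to a leaf node $B$, as nothing more than the dual suboptimality of a standard SDCA problem restricted to the coordinates of $B$, and then to invoke the known per-iteration contraction of SDCA for smooth losses established in \cite{jaggi2014communication}. Concretely, fixing all dual coordinates outside $[Q;k]=B$, the maximization defining $\epsilon_{Q,k}$ is an optimization over only the $m_B$ variables $\balpha_{[Q;k]}$, and the function being maximized is, up to constants independent of those variables, exactly a dual objective of the same form as $D$: a quadratic coupling term $-\frac{\lambda}{2}\|\bA\balpha\|_2^2$ plus separable conjugate-loss terms $-\frac{1}{m}\sum_{i\in B}\ell_i^*(-\alpha_i)$. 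Thus LocalSDCA on $B$ is precisely randomized coordinate ascent on this restricted dual, sampling each of the $m_B$ local coordinates uniformly.

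First I would write the restricted problem explicitly and note that since each $\ell_i$ is $1/\gamma$-smooth, each conjugate $\ell_i^*$ is $\gamma$-strongly convex; this is the hypothesis that lets one apply the smooth-loss branch of the SDCA guarantee rather than the slower Lipschitz branch. Second, I would establish the single-step expected decrease: picking a coordinate uniformly from the $m_B$ coordinates of $B$ and maximizing the dual along it yields, in expectation, a multiplicative decrease of the local suboptimality gap by a factor $1-s$, where $s=\frac{\lambda m\gamma}{1+\lambda m\gamma}\cdot\frac{1}{m_B}$. Third, because LocalSDCA runs $T_p$ such steps and the per-step contraction composes multiplicatively in expectation, iterating the one-step bound $T_p$ times gives $\E[\epsilon_{Q,k}(\cdot)]\le(1-s)^{T_p}\,\epsilon_{Q,k}(\balpha)$, which is exactly the claimed $\Theta_p$. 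Matching this against Assumption \ref{asp:Local_Improve} with $i+1=p$ then completes the identification.

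The main obstacle, and the only genuinely delicate bookkeeping, is the appearance of two different data counts -- the global $m$ and the local $m_B$ -- in the rate. The factor $\frac{1}{m_B}$ is the coordinate-sampling probability of the restricted problem and is easy to track. The factor $\frac{\lambda m\gamma}{1+\lambda m\gamma}$, however, comes from how the $\gamma$-strong convexity of $\ell_i^*$ trades off against the curvature of the quadratic term $\frac{\lambda}{2}\|\bA\balpha\|_2^2$, in which the columns of $\bA$ carry the $\frac{1}{\lambda m}$ scaling while the losses carry the $\frac{1}{m}$ weight. I would therefore carry the constants through the one-step optimality computation carefully, using $\|\bx_i\|_2\le 1$ to bound the relevant curvature, and verify that the global $m$ -- not $m_B$ -- is the quantity surviving in the strong-convexity ratio, whereas $m_B$ survives only in the sampling factor. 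Getting this separation right is precisely what reproduces the exact constant in \eqref{eq:Theta}; the remainder is a direct citation of the SDCA contraction lemma in \cite{jaggi2014communication}.
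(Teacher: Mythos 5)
Your plan is correct and matches the route of the result being invoked: the paper itself offers no proof of this proposition, simply attributing it to \cite{jaggi2014communication}, and your reconstruction (restricting the dual to the $m_B$ local coordinates, using that $1/\gamma$-smoothness of $\ell_i$ gives $\gamma$-strong convexity of $\ell_i^*$, deriving the one-step contraction $1-\frac{\lambda m\gamma}{1+\lambda m\gamma}\cdot\frac{1}{m_B}$, and composing over $T_p$ steps via the tower property) is exactly the argument in that reference. Your flag about keeping the global $m$ in the strong-convexity ratio while the local $m_B$ appears only through the uniform sampling probability is the right point of care and is resolved exactly as you describe.
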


\begin{algorithm2e}[t]
  \caption{Generalized distributed Dual Coordinate Ascent in a general tree node $Q$ (GDCA-Tree) on the layer-$i$, $i=1,2,...,p-1$}
  \label{alg:general_weighted}
  \SetAlgoLined
{\small
   \KwIn{ $T_i \geq 1$, $\balpha_{Q}$,  $\bw$}
   \textbf{Initialization}: $\balpha_{[Q;k]}^{(0)} \leftarrow \balpha_{[Q;k]}$ for all direct child nodes $k$ of node $Q$ ,  $\bw^{(0)} \leftarrow \bw$   \par
   \For { $t=1$  \KwTo $T_i$ }
   {\For{ all direct child nodes $k=1,2,...,K_i$ of $Q$ in parallel}
       {
        $(\bigtriangleup \balpha_{[Q;k]}, \bigtriangleup\bw_{k})$ $\leftarrow$ GDCA-Tree($\balpha_{[Q;k]}^{(t-1)}, \bw^{(t-1)}$) \par
        $\balpha_{[Q;k]}^{(t)}$ $\leftarrow$ $\balpha_{[Q;k]}^{(t-1)} + \beta_k\bigtriangleup\balpha_{[Q;k]}$ \par
       }%
       $\bw^{(t)}$ $\leftarrow$ $\bw^{(t-1)} +  \sum_{k=1}^{K_i} \beta_k \bigtriangleup \bw_k$ \par
} %
  \KwOut{$\bigtriangleup\balpha_{Q} \triangleq \balpha_{{Q}}^{(T_i)}-\balpha_{{Q}}^{(0)}$, and $\bigtriangleup \bw_{Q} \triangleq \bA_{Q} \bigtriangleup \balpha_{Q}$  }
}%
\end{algorithm2e}

\begin{procedure}[t]
\caption{P(). Generalized Distributed Dual Coordinate Ascent  (GDCA-Tree) for a leaf node $Q$ on the layer-$p$}
  \label{alg:leaf}
  \SetAlgoLined
{\small
   \KwIn{ $T_p \geq 1$, $\balpha_{Q} \in \R^{|Q|}$, and $\bw \in \R^{d}$ consistent with other coordinate blocks of $\balpha$ s.t. $\bw=\bA\balpha$ }
   \textbf{Data:} ${\{(\bx_i, y_i)\}}_{i \in Q}$  \par
   \textbf{Initialization:} $\bigtriangleup \balpha_{Q} \leftarrow 0 \in \R^{|Q|}$, and $\bw^{(0)} \leftarrow \bw$ \par
   \For { $h=1$  \KwTo $T_p$ }
   {    choose $i \in Q$ uniformly at random \par
        find $\bigtriangleup \alpha$ maximizing $-\frac{\lambda m}{2} ||\bw^{(h-1)} + \frac{1}{\lambda m} \bigtriangleup\alpha \bx_i||^2 - \ell_i^{*}(-(\alpha_i^{(h-1)} + \bigtriangleup\alpha))$ \par
        $\alpha_{i}^{(h)}$ $\leftarrow$ $\alpha_{i}^{(h-1)} + \bigtriangleup \alpha$ \par
        $ (\bigtriangleup \balpha_{Q})_i$ $\leftarrow$ $(\bigtriangleup\balpha_{Q})_i + \bigtriangleup\alpha$ \par
	$\bw^{(h)}$ $\leftarrow$ $\bw^{(h-1)} + \frac{1}{\lambda m} \bigtriangleup\alpha \bx_i$ \par
   }
      \KwOut{ $\bigtriangleup\balpha_{Q}$ and $\bigtriangleup\bw_{Q} \triangleq \bA_{Q} \bigtriangleup\balpha_{Q}$ }
}%
\end{procedure}
From Proposition \ref{prop:LocalSDCA} and Assumption \ref{asp:Local_Improve}, over randomly chosen data points in the LocalSDCA at a leaf node, we can have guaranteed improvement at its parent's node in terms of the expectation of local sub-optimality gap. 

For Algorithm \ref{alg:general_weighted}, we have the following convergence theorem at a general tree node $Q$:
\begin{theorem}
\label{thm:gen_GDCA}
For a tree node $Q$ on the $i$-th layer, $i=0, ..., p-1$, having $K_i$ direct child nodes satisfying Assumption \ref{asp:Local_Improve}, with parameters $\Theta^{1}_{i+1}$, $\Theta_{i+1}^2$, ..., and $\Theta^{K_i}_{i+1}$. Assume that loss functions $\ell_i(\cdot)$'s are $\nicefrac{1}{\gamma}$-smooth. Then for any input $\bw$ to Algorithm \ref{alg:general_weighted} with $T_i$ iterations, the following geometric convergence rate holds for $Q$:
\par\noindent\small
\begin{align}
\label{ieq:convergence_GDCA}
    & \E [D(\balpha_Q^{*}, \balpha_{\overline{Q}}) - D(\balpha_Q^{(T_i)}, \balpha_{\overline{Q}}) ] \\
    & \hspace{-0.3em} \leq \underbrace{ \bigg( \max_{k=1,...,K_i}\hspace{-0.3em} \big( \hspace{-0.1em} 1 \hspace{-0.2em} - \hspace{-0.2em} (1 \hspace{-0.2em} -\hspace{-0.2em} \Theta^k_{i+1}) \beta_{k} \big) \frac{\lambda m \gamma}{ \rho_i \hspace{-0.2em} + \hspace{-0.2em} \lambda m \gamma}\bigg)^{T_i}}_{\text{Convergence bound}} \nonumber \\
    &\quad\quad\quad \times \big(D(\bm{\alpha}_Q^{*}, \bm{\alpha}_{\overline{Q}}) \hspace{-0.2em} -\hspace{-0.2em} D(\balpha_Q^{(0)} \hspace{-0.3em} ,\balpha_{\overline{Q}}) \big),\nonumber 
\end{align}
\normalsize
where $\rho_i$ is any real number larger than $\rho_{min}$ defined as 
\par\noindent\small
\begin{align*}
    \rho_{min} \hspace{-0.2em}\triangleq \hspace{-0.2em}\underset{\balpha_Q \in \mathbb{R}^{|Q|}}{\text{maximize}}\;\lambda^2 m^2 \frac{\sum_{k=1}^{K_i} \hspace{-0.2em}\|\bA_{[Q;k]}\balpha_{[Q;k]} \|_2^2\hspace{-0.2em} - \hspace{-0.2em} \|\bA_{Q}\balpha_{Q} \|_2^2  }{ \|\balpha_{Q} \|_2^2} \hspace{-0.2em}\geq\hspace{-0.2em} 0.
\end{align*}
\normalsize
\end{theorem}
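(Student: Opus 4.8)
\emph{Proof proposal.} The plan is to reduce the $T_i$-iteration bound to a single-outer-iteration contraction and then iterate, following the dual-ascent template of DDCA-Tree but carrying the per-child weights $\beta_k$ and the per-child rates $\Theta^k_{i+1}$ through every step. Concretely, fixing the coordinates $\balpha_{\overline{Q}}$ and writing $\epsilon^{(t)} \triangleq D(\balpha_Q^{*},\balpha_{\overline{Q}}) - D(\balpha_Q^{(t)},\balpha_{\overline{Q}})$, I would first establish a one-step inequality of the form $\E[\epsilon^{(t)} \mid \balpha_Q^{(t-1)}] \le c\,\epsilon^{(t-1)}$ with $c = \big(\max_{k}(1-(1-\Theta^k_{i+1})\beta_k)\big)\tfrac{\lambda m\gamma}{\rho_i+\lambda m\gamma}$, and then chain these bounds across $t=1,\dots,T_i$ by the tower rule to obtain the stated $c^{T_i}$ rate.

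For the one-step inequality I would expand $D(\balpha_Q^{(t)},\balpha_{\overline{Q}}) - D(\balpha_Q^{(t-1)},\balpha_{\overline{Q}})$ using the update $\balpha_{[Q;k]}^{(t)}=\balpha_{[Q;k]}^{(t-1)}+\beta_k\bigtriangleup\balpha_{[Q;k]}$ from Algorithm~\ref{alg:general_weighted}. The separable term $-\tfrac1m\sum_i \ell_i^{*}(-\alpha_i)$ splits over the child blocks $[Q;k]$, and since $\sum_k\beta_k=1$ with $0\le\beta_k\le1$, convexity of each $\ell_i^{*}$ lets me pass the $\beta_k$-scaling outside, lower-bounding the change by a $\beta_k$-weighted combination of the full (unscaled) per-child changes. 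The quadratic term $-\tfrac{\lambda}{2}\|\bA\balpha\|_2^2$ is where the children couple, through the cross inner products of the $\bA_{[Q;k]}\bigtriangleup\balpha_{[Q;k]}$; this coupling is controlled precisely by the definition of $\rho_{\min}$, which bounds $\sum_k\|\bA_{[Q;k]}\balpha_{[Q;k]}\|_2^2-\|\bA_Q\balpha_Q\|_2^2$ by $\tfrac{\rho_i}{\lambda^2 m^2}\|\balpha_Q\|_2^2$ and thereby lets me decouple the simultaneous update into a sum of per-child surrogate improvements at the cost of the penalty $\rho_i$. The hypothesis $\rho_i>\rho_{\min}$ is exactly what makes this decoupling valid.

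Next I would invoke Assumption~\ref{asp:Local_Improve} to lower-bound each child's surrogate improvement: moving a fraction $s\in[0,1]$ toward the block-maximizer shrinks the local sub-optimality gap $\epsilon_{Q,k}$ by at least the factor $(1-\Theta^k_{i+1})$ in expectation, while the $\nicefrac1\gamma$-smoothness of the $\ell_i$ (equivalently $\gamma$-strong convexity of the $\ell_i^{*}$) converts the squared step length into a multiple of $\epsilon_{Q,k}$. Combining these and relating the local gaps $\epsilon_{Q,k}$ to the global gap $\epsilon^{(t-1)}$ yields a lower bound on the one-step improvement of the form $\big(s-\tfrac{s^2}{2}\cdot\tfrac{\rho_i+\lambda m\gamma}{\lambda m\gamma}\big)$ times a $\beta_k$- and $\Theta^k_{i+1}$-dependent coefficient; optimizing over the feasible $s$ (clipping to $s=1$ when the unconstrained optimum exceeds $1$) produces the $\tfrac{\lambda m\gamma}{\rho_i+\lambda m\gamma}$ factor, and combining it with the weighted local improvement $(1-(1-\Theta^k_{i+1})\beta_k)$ while retaining the slowest child through a $\max_k$ assembles the stated contraction $c$.

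I expect the main obstacle to be the aggregation step with non-uniform weights: unlike the uniform-averaging analysis of DDCA-Tree, here each child enters with its own $\beta_k$ and its own $\Theta^k_{i+1}$, so the cross-terms of $\|\bA_Q\balpha_Q\|_2^2$ must be absorbed into the single penalty $\rho_i$ while keeping the per-child accounting tight enough to land exactly on the $\max_k$-product form in \eqref{ieq:convergence_GDCA} rather than a coarser weighted sum. Care is also needed to verify that the conditional-expectation chaining across the $T_i$ iterations is legitimate, since the $\bigtriangleup\balpha_{[Q;k]}$ are random through the LocalSDCA coordinate sampling at the leaves and the guarantee of Assumption~\ref{asp:Local_Improve} is itself only in expectation.
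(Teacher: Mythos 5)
Your proposal follows essentially the same route as the paper's (sketched) proof: a one-step contraction obtained by decomposing the weighted simultaneous update into per-child updates, applying Assumption \ref{asp:Local_Improve} to each child's local sub-optimality gap $\epsilon_{Q,k}$, invoking the CoCoA-style lemma (your $s$-line-search with $\gamma$-strong convexity of $\ell_i^*$ and the $\rho_i$ coupling bound) to lower-bound $\sum_{k}\epsilon_{Q,k}$ by $\tfrac{\lambda m\gamma}{\rho_i+\lambda m\gamma}$ times the global gap, and chaining over the $T_i$ iterations --- the paper outsources exactly that lemma to Appendix A of \cite{cho2021distributed} and omits the chaining "due to space limitation." The only deviation is that you also invoke $\rho_{\min}$ to decouple the aggregation step itself, whereas the paper gets that step for free: since $\sum_k\beta_k=1$ the update is a convex combination, so Jensen's inequality applied to the concave $D$ gives $D(\balpha^{(t)}+\sum_k\beta_k\bigtriangleup\balpha_{<[k]>},\balpha_{\overline{Q}})\ge\sum_k\beta_k\,D(\balpha^{(t)}+\bigtriangleup\balpha_{<[k]>},\balpha_{\overline{Q}})$ with no $\rho$-penalty, reserving $\rho_i$ solely for the local-to-global gap lemma.
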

\begin{proof}
The dual objective value for a tree node $Q$ that has $K$ direct child nodes is bounded by the following equation: \\[-15pt] 
\par\noindent\small
\begin{align}
D\big(\balpha^{(t+1)}_{[1:K]}, \balpha_{\overline{Q}} \big ) 
& = D\big(\balpha^{(t)}_{[1:K]} + \sum_{k=1}^K \beta_k \bigtriangleup \balpha_{<[k]>}, \balpha_{\overline{Q}} \big)  \nonumber\\
& \geq \sum_{k=1}^K \beta_k  D\big(\balpha^{(t)}_{[1:K]} + \bigtriangleup \balpha_{<[k]>}, \balpha_{\overline{Q}}\big),\label{ieq:jensen} \\[-20pt] \nonumber
\end{align}
\normalsize
where $\balpha_{<[k]>}$ is $\balpha_{[k]}$ with zero-padding to increase the dimension, $Q = [1:K] = \cup_{k=1}^K [k]$ is the index set corresponding to the node $Q$, and $\balpha_{\overline{Q}}$ is the un-updated coordinate. The inequality in \eqref{ieq:jensen} is obtained by using Jensen's inequality. Then, we have\\[-15pt] 
\par\noindent\small
\begin{align*}
    & \E \bigg[ D\big(\balpha^{(t+1)}_{[1:K]}, \balpha_{\overline{Q}}\big) - D\big(\balpha^{(t)}_{[1:K]},\balpha_{\overline{Q}}\big) \bigg]  \\
    & \geq \E \bigg[ \sum_{k=1}^{K} \beta_k \big( D\big(\balpha^{(t)}_{[1:K]} + \bigtriangleup \balpha_{<[k]>},\balpha_{\overline{Q}}\big) - D\big(\balpha^{(t)}_{[1:K]},\balpha_{\overline{Q}}\big)   \big) \bigg]\\
    & = \E \bigg[ \sum_{k=1}^{K} \beta_k \big( \epsilon_{Q,k}(\balpha^{(t)}_{[1:K]},\balpha_{\overline{Q}}) - \epsilon_{Q,k}(\balpha^{(t)}_{[1:K]} + \bigtriangleup \balpha_{<[k]>},\balpha_{\overline{Q}}) \big) \bigg]\\
    & = \sum_{k=1}^{K} \beta_k \big( \E [ \epsilon_{Q,k}(\balpha^{(t)}_{[1:K]},\balpha_{\overline{Q}})] - \E[ \epsilon_{Q,k}(\balpha^{(t)}_{[1:K]} + \bigtriangleup \balpha_{<[k]>},\balpha_{\overline{Q}}) ] \big)\\
    & \geq \sum_{k=1}^{K} \beta_k (1-\Theta^k) \big( \epsilon_{Q,k}(\balpha^{(t)}_{[1:K]},\balpha_{\overline{Q}}) \big)\\
    & \geq \bigg( \min_{k=1,2,...,K} \underbrace{ (1-\Theta^k)\beta_k }_{\substack{\text{imbalanced data factor}\\{\text{and its compensation parameter}}}} \bigg) \underbrace{\sum_{k=1}^{K} \big( \epsilon_{Q,k}(\balpha^{(t)}_{[1:K]},\balpha_{\overline{Q}}) \big) }_{(A)}
\end{align*}
\normalsize
The lower-bound of $(A)$ can be obtained in Appendix A of \cite{cho2021distributed}, which leads to \eqref{ieq:convergence_GDCA}. Due to space limitation, we omit the remaining proof.
\end{proof}

%

This theorem indicates that for a tree node $Q$, if its child nodes satisfy the local sub-optimality gap with local improvement, then, at the node $Q$, GDCA-Tree can have the convergence rate shown in \eqref{ieq:convergence_GDCA}. From Proposition \ref{prop:LocalSDCA}, we know that Assumption \ref{asp:Local_Improve} holds at leaf nodes using LocalSDCA. Then, for the convergence of GDCA-Tree in a general tree network, the left hand side of \eqref{ieq:convergence_GDCA} can be considered as the local sub-optimality gap that the tree node $Q$ can achieve from $(\balpha^{(T_i)}_Q, \balpha_{\overline{Q}})$, and the convergence bound in \eqref{ieq:convergence_GDCA} can be thought of as local improvement $\Theta$ from the parent node of $Q$. In this way, the convergence of GDCA-Tree in a whole tree network can be understood in a recursive manner.

\subsection{Impact of imbalanced data on the convergence speed}
\vspace{-0.5em}
From Proposition \ref{prop:LocalSDCA}, among all leaf nodes, with the same number of local iterations, $T_p$, in LocalSDCA, the leaf node having the largest number of data points will have the largest $\Theta_p$ (i.e., close to 1) due to $\nicefrac{1}{m_B}$ term in \eqref{eq:Theta}, and become a bottleneck in the convergence speed. Thus, $\nicefrac{1}{m_B}$ in \eqref{eq:Theta} (or broadly $(1-\Theta^k)$) can be thought of as imbalanced data factor in a leaf node, while $\beta_k$ can be considered as a compensation parameter for the imbalanced data factor at the $k$-th leaf node by providing more weights on a bottleneck node. Note that in the case of balanced data, $\beta_k = \nicefrac{1}{K}$, $k=1,...,K$.

Furthermore, the parameter $\rho_{min}$ is a value indicating the similarity of global parameter $\bw$'s among the $K$ child nodes. If we have a smaller $\rho$ value, i.e., smaller similarity between the global parameters $\bw$ in child nodes, then we will have a larger convergence bound. In other words, as the global parameters among child nodes become similar, the convergence speed will decrease.

\vspace{-0.5em}
\subsection{Determining the compensation parameter $\beta_k$ for imbalanced data}\label{subsec:beta}
\vspace{-0.5em}
In order to determine the weight parameter $\beta_k$ that compensates imbalanced data effect, we consider the case where each node on a network has different numbers of data points. Suppose we have a general node $Q$ which has $K$ direct child nodes. The $k$-th child node has a partial dataset, where the set of indices of data points is denoted by $[Q;k]$. Due to the scenario that we consider here, the cardinality of the set $[Q;k]$, i.e., the number of data points in the $k$-th direct child node of $Q$, is different from each other. By considering the imbalanced numbers of data scenario, we propose the following weight calculation for $\beta_k$ in the weighted sum updating scheme \eqref{eq:updating}:\\[-25pt]
\begin{align}\label{eq:betak_scenario1}
&\beta_k = \nicefrac{|[Q;k]|}{|Q|},\\[-20pt] \nonumber
\end{align}
where the cardinality of the set of indices of data points in the $k$-th direct child node of $Q$ is represented by $|[Q;k]|$. The weighted sum updating scheme is used to put more weight on local updating parameters obtained from processing more data.  The intuition behind this is that if one local worker (e.g., $W_A$) has most data and other workers (e.g., $W_B$) have only a few, then, the solution obtained from $W_A$ is prone to be closer to the global solution $\bw^{\star}$. Thus, the global parameter $\bw^{(t)}$ is updated based on the portion of data in each local worker or sub-central node. The computation of $\beta_k$ can be done as a preprocessing step and is negligible when compared to the coordinate ascent operation.

\vspace{-0.5em}
\subsection{Number of iterations considering imbalanced data}
\label{subsec:Delayed_DDCA}
\vspace{-0.5em}
The convergence bound in \eqref{ieq:convergence_GDCA} is a function of the number of iterations as well as the imbalanced data factor. Therefore, the question raised is that is there any relationship between the number of local iterations and the imbalanced data? Let us answer this question here. For clarity and simplicity, we consider a cluster shown in Fig. \ref{fig:generalDistSystem}, where the sub-central node is denoted by $Q$. For the optimal number of local iterations to have minimum execution time in convergence, as in \cite{cho2021distributed}, we consider the following optimization problem:
\par\noindent\small
\begin{align}\label{eq:optimal_T}
\underset{T_p \geq 0}{\text{minimize}}\;\bigg( \hspace{-0.1em} 1 \hspace{-0.2em} - \hspace{-0.2em} (1 \hspace{-0.2em} -\hspace{-0.2em} \Theta^k_{p}) \beta_{k} \frac{\lambda m \gamma}{ \rho_i \hspace{-0.2em} + \hspace{-0.2em} \lambda m \gamma}\bigg)^{T_{p-1}},
\end{align}
\normalsize
where $T_p$ and $T_{p-1}$ are the numbers of local iterations in a leaf node and its parent node respectively. $\Theta_p$ is introduced in \eqref{eq:Theta}. Since the total time, $t_{total}$, for distributed ML process in a cluster can be expressed as $t_{total} = (t_{lp} T_p + t_{delay} + t_{cp}) \cdot T_{p-1}$, where $t_{lp}$ is local processing time per one iteration in a leaf node, $t_{delay}$ is time for communication delay in round trip between a leaf node and its parent node, i.e., sub-central node in Fig. \ref{fig:generalDistSystem}, and $t_{cp}$ is time for accumulation at a sub-central node. Then, by plugging \eqref{eq:Theta} into \eqref{eq:optimal_T}, denoting $\frac{\lambda m \gamma}{ 1+ \lambda m \gamma}$ as $c_1$, and $\frac{\lambda m \gamma}{\rho_i + \lambda m \gamma}$ as $c_2$, we have
\par\noindent\small
\begin{align*}
\underset{T_p \geq 0}{\text{minimize}}\;\bigg( \hspace{-0.1em} 1 \hspace{-0.2em} - \hspace{-0.2em} (1 \hspace{-0.2em} - (1- \frac{c_1}{|[Q;k]|})^{T_p}) c_2\cdot \frac{|[Q;k]|}{|Q|} \bigg)^{\frac{t_{total}}{t_{lp} T_p + t_{delay} + t_{cp}}}.
\end{align*}
\normalsize

From \cite{cho2021distributed}, the optimal number of local iterations $T_p$ at the leaf node can be obtained as follows:
\par\noindent\small
\begin{align*}
T_p = \frac{1}{\ln (1 - \frac{c_1}{|[Q;k]|})} W \bigg(  (1- \frac{c_1}{|[Q;k]|} )^r \ln ( 1- c_2\cdot \frac{|[Q;k]|}{|Q|})\bigg) - r,
\end{align*}
\normalsize
where $c_1, c_2\in [0,1)$, communication delay severity level between the local processing time and the communcation delay is denoted by $r$, i.e., $r=\nicefrac{(t_{delay}+t_{cp})}{t_{lp}}$, and $W(\cdot)$ is the Lambert W-function \cite{corless1996lambertw}. If the delay severity $r$ is zero or small enough to ignore, then, the Lambert W-function term can be expressed as a small negative constant, and $\ln(1-x) \approx -x$ for small $x$, we have 
\par\noindent\small
\begin{align} \label{eq:opt_Tp}
T_p  \propto |[Q;k]|.
\end{align}
\normalsize
Namely, the number of local iterations need to be increased proportionally to the number of data points in a node in order to minimize the execution time during convergence. Therefore, by taking into account the size of the data in the bottleneck node, we propose the delayed GDCA-Tree, where the number of local iterations is determined by the size of the local dataset in a bottleneck node. In the delayed GDCA-Tree method, the sharing of information between local workers and the sub-central node is delayed, but with a larger number of local iterations. This results in faster convergence speed.


\vspace{-0.5em}
\section{Numerical Experiments}
\label{sec:simulation}
\vspace{-0.5em}
To validate the performance of the delayed GDCA-Tree, we conduct simulations and compare it to the standard DDCA-Tree when dealing with imbalanced data on a tree network. We test the method on various machine learning tasks including regression and classification using the following datasets: wine quality dataset\footnote{\url{https://archive.ics.uci.edu/ml/datasets/wine+quality}} and Covtype dataset\footnote{\url{https://www.csie.ntu.edu.tw/~cjlin/libsvmtools/datasets/binary.html\#covtype.binary}}.

In the wine quality dataset, there are 6493 data points with 12 attributes. The twelfth attribute, quality, is used as the measurement $\by$. Each instance is normalized with $\ell_2$ norm to ensure $\| \bx_i \|_2 \leq 1$, $i=1,...,m$. To distribute the dataset over a tree network, we organize a two-layered tree network with one central node, two sub-central nodes (denoted by $S_1$ and $S_2$), and four local workers (denoted by $W_i$, $i=1,..., 4$). Each sub-central node has two local workers. The tuning parameter $\lambda$ in \eqref{prob:dual} is set to 1. We consider the case where each local worker has a different number of data points. To simulate this scenario, we unevenly distribute the wine quality data into four local workers. 30\% of the total dataset are allocated to three local workers, $W_1$, $W_2$, and $W_3$ (10\% each, which is 649 data points). The remaining 70\% of the total dataset (4546 data points) is allocated to one local worker, i.e., $W_4$, without overlapping the data among local workers.

In the delayed GDCA-Tree, we use \eqref{eq:betak_scenario1} to calculate $\beta_k$ for each local worker. Therefore, for each updating global parameter $\bigtriangleup \bw_k$ that each local worker has from its local dataset, $\beta_1 = \beta_2 = 0.5 (=649/(649+649))$, $\beta_3 = 0.1249 (=649/(649+4546))$, and $\beta_4 = 0.8752 (=4546/(649+4546))$. At the sub-central nodes $S_1$ and $S_2$, $0.2 (= (649+649)/6493)$ and $0.8 (=(649+4546)/6493)$ are used for $\beta_k$ respectively.

In order to obtain statistical results, we run 100 random trials. From the 100 trials, we calculate the average execution time at each outer iteration and compute the average duality gap $P(\bw(\balpha^{(t)})) - D(\balpha^{(t)})$ at the central station. For the number of local iterations $T_p$ in Procedure \ref{alg:leaf},  we use $T_p = 100$. And for delayed GDCA-Tree, we use 300 local iterations for $T_p$ by considering the increased number of data points in $W_4$. As shown in Fig. \ref{fig:sim_scenario1}(a), the delayed GDCA-Tree (red solid line) can improve the convergence speed, compared to the standard DDCA-Tree \cite{cho2021distributed} with imbalanced data (blue dotted line). The figure illustrates that by using the delayed GDCA-Tree, we can mitigate the effect of imbalanced data on distributed dual coordinate ascent in a tree network.

\begin{figure}[t]
    \centering
   \subfloat[Wine quality]{ \includegraphics[scale=0.45]{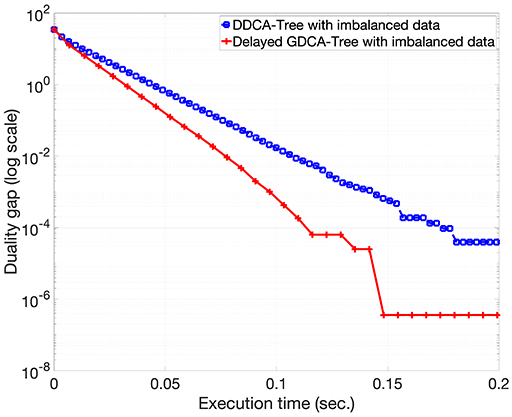} }\;
   \subfloat[Covtype]{\includegraphics[scale=0.45]{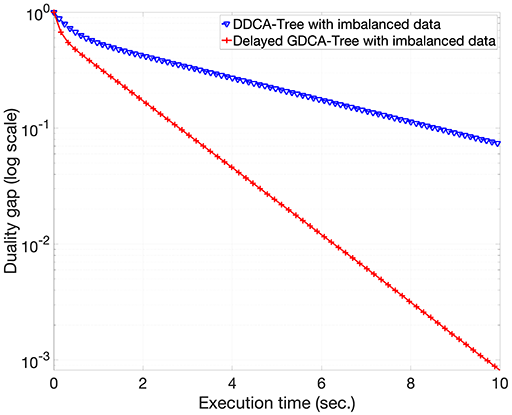}}
   \vspace{-0.5em}
    \caption{\small{Comparison between standard DDCA-Tree (blue)  and delayed GDCA-Tree (red) with imbalanced data.}}
    \label{fig:sim_scenario1}
\end{figure}

Additionally, we run binary classification with Covtype dataset having a total of 581012 instances and 12 attributes. The dataset is normalized and the labels are set to be in the set \{-1,1\}. A standard hinge loss and $\ell_2$ regularization are used for a linear SVM. The tree network used has one central node, two sub-central nodes, and eight local workers, where each sub-central node has four local workers. The data is distributed in an imbalanced way, with 5\% of the total dataset going to seven local workers, and 65\% going to one local worker with no overlap. The number of communications between the local workers and the sub-central node is set to 10. The numbers of local iterations in local workers in the standard DDCA-Tree and the delayed GDCA-Tree are set to 1000 and 4000 respectively. Fig. \ref{fig:sim_scenario1}(b) demonstrates that under the imbalanced data scenario, the delayed GDCA-Tree can improve the convergence speed of the DDCA-Tree by considering the information of imbalanced data.

\vspace{-0.5em}
\small
\bibliographystyle{IEEEbib}
\bibliography{refs_distributedML,refs_GSDCA}

\end{document}